\documentclass{article}

 \usepackage[preprint]{neurips_2026}

\usepackage[utf8]{inputenc} 
\usepackage[T1]{fontenc}    
\usepackage{hyperref}       
\usepackage{url}            
\usepackage{booktabs}       
\usepackage{amsfonts}       
\usepackage{nicefrac}       
\usepackage{microtype}      
\usepackage{xcolor}         

\definecolor{metacolor}{HTML}{0064E0}
\definecolor{myred}{HTML}{EA4335}
\definecolor{mygreen}{HTML}{34A853}
\definecolor{myblue}{HTML}{4285F4}
\definecolor{myyellow}{HTML}{FBBC04}

\newcommand{\ie}{\textit{i.e.}\xspace}
\newcommand{\eg}{\textit{e.g.}\xspace}

\usepackage{colortbl}
\usepackage{multirow}
\usepackage[linesnumbered,ruled,vlined]{algorithm2e}
\usepackage{soul}
\usepackage{makecell}
\usepackage{wrapfig}
\usepackage{cancel}
\usepackage{graphicx}
\usepackage{subfig}
\usepackage{enumitem}
\usepackage{subcaption}
\usepackage{float}

\usepackage{xcolor}
\usepackage{xcolor}
\usepackage[most]{tcolorbox}
\tcbuselibrary{skins, breakable}

\definecolor{mygray}{RGB}{240, 240, 240} 
\definecolor{myblue}{RGB}{230, 240, 255}
\definecolor{mypink}{RGB}{250, 230, 235}
\definecolor{myred}{RGB}{172, 59, 40}
\definecolor{mygreen}{RGB}{141, 185, 89}
\definecolor{mycharpink}{RGB}{203, 91, 96}
\definecolor{mycharblue}{RGB}{93, 114, 188}
\newtcbox{\mybox}[1]{%
  enhanced, on line, arc=2pt, 
  colback=#1, 
  colframe=#1!70!black, 
  boxrule=0.5pt, boxsep=0pt, left=2pt, right=2pt, top=1pt, bottom=1pt,
  frame style={dash pattern=on 3pt off 1.5pt}
}

\usepackage{amsmath, amssymb, amsthm}

\newtheorem{lemma}{Lemma}[section]

\newtheorem{theorem}{Theorem}

\usepackage{geometry}
\usepackage{bm}
\usepackage{xcolor}
\usepackage{mdframed}

\DeclareMathOperator{\Var}{Var}

\title{Think, then Score: Decoupled Reasoning and Scoring for Video Reward Modeling}

\author{
    Yuan Wang$^{1, 2}$\footnote[1]\ \ \footnote[4]\ \ , 
    Ouxiang Li$^{1}$ \footnote[4] \ \ ,
    Yulong Xu$^2$\footnote[3]\ \ , 
    Borui Liao$^2$, 
    Jiajun Liang$^{2}$\footnote[2]\ \ ,\\
    Jinghan Li$^1$,
    Meng Wang$^2$,
    Xintao Wang$^2$,
    Pengfei Wan$^2$,
    Kuien Liu$^3$,
    Xiang Wang$^1$\footnote[2]\ \ \\
    \small $^1$University of Science and Technology of China, 
    $^2$Kling Team, Kuaishou Technology,  \\
   \small $^3$ Institute of Software Chinese Academy of Sciences \\
    {\tt\small wy1001@mail.ustc.edu.cn, lioox@mail.ustc.edu.cn, liangjiajun@kuaishou.com}
}

\begin{document}

\renewcommand{\thefootnote}{\fnsymbol{footnote}} 
\footnotetext[1]{Work done during internship at Kling Team, Kuaishou Technology.}
\footnotetext[4]{Equal Contribution.}
\footnotetext[2]{Corresponding authors.}
\footnotetext[3]{Project Lead.}
\renewcommand{\thefootnote}{\arabic{footnote}} 

\maketitle



\vspace{-5mm}
\begin{figure*}[ht]
\centering
\includegraphics[width=\textwidth]{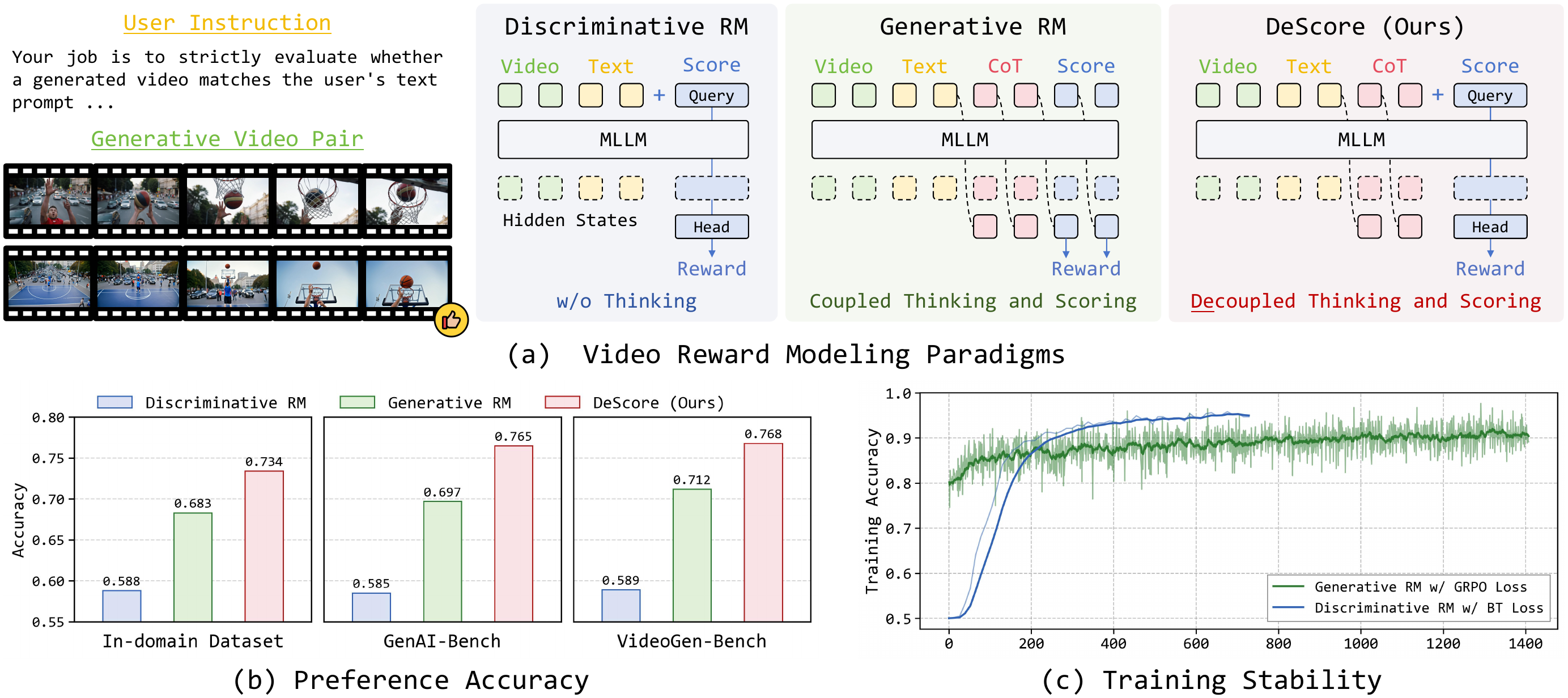}
\vspace{-5mm}
\caption{\textbf{Overview and Motivation of DeScore.} \textbf{(a) Video Reward Modeling Paradigms.} Existing video reward models generally follow two paradigms: Discriminative RMs directly regress rewards without explicit thinking (\eg, CoT), and Generative RMs couple thinking and scoring within a single autoregressive sampling chain. DeScore improves both paradigms based on two observations: First, \textbf{(b) Preference Accuracy} shows that incorporating CoT enables Generative RMs to outperform Discriminative RMs, highlighting the necessity of explicit thinking for generalization. Second, \textbf{(c) Training Stability} reveals that coupling thinking and scoring requires the final score to be optimized through GRPO loss~\cite{DeepSeekR1, DeepSeekMath}, leading to pronounced training fluctuations. In contrast, discriminative training with BT loss~\cite{bradley1952rank} exhibits smooth convergence. Motivated by these findings, DeScore introduces a decoupled ``think-then-score'' paradigm that effectively leverages the generalization benefits of CoT reasoning while preserving the training stability inherent to discriminative optimization.}
\label{fig:intro}
\end{figure*}

\begin{abstract}
Recent advances in generative video models are increasingly driven by post-training and test-time scaling, both of which critically depend on the quality of video reward models (RMs). An ideal reward model should predict accurate rewards that align with human preferences across diverse scenarios. However, existing paradigms face a fundamental dilemma: \textit{Discriminative RMs} regress rewards directly on features extracted by multimodal large language models (MLLMs) without explicit reasoning, making them prone to shortcut learning and heavily reliant on massive data scaling for generalization. In contrast, \textit{Generative RMs} with Chain-of-Thought (CoT) reasoning exhibit superior interpretability and generalization potential, as they leverage fine-grained semantic supervision to internalize the rationales behind human preferences. 
However, they suffer from inherent optimization bottlenecks due to the coupling of reasoning and scoring within a single autoregressive inference chain. 
To harness the generalization benefits of CoT reasoning while mitigating the training instability of coupled reasoning and scoring, we introduce DeScore, a training-efficient and generalizable video reward model. DeScore employs a decoupled ``think-then-score'' paradigm: an MLLM first generates an explicit CoT, followed by a dedicated discriminative scoring module consisting of a learnable query token and a regression head that predicts the final reward.
DeScore is optimized via a two-stage framework: (1) a discriminative cold start incorporating a random mask mechanism to ensure robust scoring capabilities, and (2) a dual-objective reinforcement learning stage that independently refines CoT reasoning quality and calibrates the final reward, ensuring that higher-quality reasoning directly translates to superior model performance.
Empirical evaluations demonstrate that DeScore achieves superior training efficiency and optimization stability, while outperforming state-of-the-art methods across diverse in-domain and out-of-distribution benchmarks. Moreover, DeScore also proves effective for post-training, leading to improved generated video quality.
\end{abstract}

\section{Introduction}
\label{sec:intro}
Modern generative video models~\cite{kling2026, hailuo2025, seedream2025, sora2025, seedance2026, wan2025wan, team2025longcat, kong2024hunyuanvideo} have made remarkable progress in high-quality video synthesis, largely driven by post-training~\cite{liu2025flow, xue2025dancegrpo, wallace2024diffusion} and test-time scaling~\cite{ma2025inference, oshima2025inference}. Crucial to these paradigms is the video reward model, whose quality dictates the performance ceiling of the optimization process. An ideal video reward model must accurately align with human preferences across diverse scenarios and complex motion patterns. This necessitates robust out-of-distribution (OOD) generalization to maintain the accuracy of the reward signals.

One representative category of video reward models is the discriminative paradigm~\cite{he2024videoscore, liu2025improving}, which typically regresses scalar rewards from multimodal large language model (MLLM) features. Despite the stable optimization signals offered by regression losses (\eg, Bradley-Terry (BT) loss or MSE loss), the absence of explicit reasoning forces these models to infer fine-grained semantic differences from coarse preference labels. This often leads to shortcut learning~\cite{zeng2024dawn}, where models exploit shortcut features to fit training labels, rather than capturing the intrinsic semantic attributes aligned with human judgment. Compensating for this requires massive data scaling~\cite{liu2025improving}, which not only incurs prohibitive training overhead but also limits the model's adaptability to diverse OOD scenarios.

Another representative category follows the generative paradigm~\cite{wu2025rewarddance, he2025videoscore2, wang2025unified, wang2025unified2, wang2024lift, xu2026visionreward}, formulating reward modeling as a next-token prediction task within an MLLM framework. While directly generating a score token shares the limitations of discriminative models, advanced methods incorporate Chain-of-Thought (CoT) reasoning~\cite{wu2025rewarddance, he2025videoscore2, wang2025unified, wang2024lift} prior to the final reward. This process provides fine-grained semantic supervision, enabling the model to internalize the rationale behind human preferences. Specifically, the model learns why a video is superior rather than merely fitting a ranking, thereby enhancing its generalization potential, as evidenced by Figure~\ref{fig:intro} (b).

However, the generative paradigm predicts the reward as a token sequence in a next-token prediction manner rather than as an explicit scalar, which incurs the following optimization bottlenecks in Figure~\ref{fig:intro} (c).
\textbf{(1) \textit{Lack of direct reward-value optimization}:} Generative video reward models rely on supervised fine-tuning (SFT) and reinforcement learning (RL) for optimization. These methods fundamentally optimize discrete token probabilities instead of providing a direct gradient for the reward value, compared to the BT loss~\cite{bradley1952rank} (see Appendix~\ref{supp:direct}). 
Additionally, coupling logical reasoning and scoring within a single sampling chain forces a heavy reliance on RL (\eg, GRPO~\cite{DeepSeekR1, DeepSeekMath}) to improve performance, introducing two primary challenges:
\textbf{(2) \textit{Credit assignment difficulty}:} When an entire generated sequence shares a single rollout reward, it becomes difficult to determine whether a suboptimal output stems from low-quality intermediate reasoning tokens or an inaccurate final reward token. 
\textbf{(3) \textit{High-variance policy gradients}:} RL-based policy optimization inherently suffers from high gradient variance~\cite{zhang2021sample, he2025vl, yu2025dapo}, which leads to training instability (see Appendix~\ref{supp:gradient}).

These challenges motivate a fundamental design question: \textit{\textbf{{How can we harness the interpretability and generalization introduced by CoT reasoning during reward modeling while shielding the training process from the optimization instability of a coupled sampling chain?}}} To this end, we introduce DeScore, a training-efficient and generalizable video reward model through a decoupled ``Think-then-Score'' paradigm. By isolating reasoning from scoring, DeScore retains the fine-grained interpretability of generative CoT while mitigating the aforementioned bottlenecks through a specialized discriminative scoring module consisting of a learnable query token and a regression head. Crucially, this structural decoupling enables targeted optimization for the scoring module, bypassing the credit assignment dilemma caused by applying GRPO across the entire reasoning sequence. Moreover, the final scalar reward can be directly optimized via a stable, margin-based loss (\eg, BT loss) rather than relying on high-variance policy gradients, thereby ensuring robust training efficiency.

To facilitate effective reward model training with our decoupled design, we instantiate DeScore based on Qwen3-VL-8B~\cite{bai2025qwen3} and propose a two-stage training framework: (1) discriminative cold start and (2) dual-objective reinforcement learning (RL).
In the cold-start stage, we jointly fine-tune the MLLM backbone and the scoring module using the BT loss. To improve robustness, we introduce a random masking mechanism that randomly drops the CoT during training. This strategy encourages the scoring module to leverage both the raw inputs and the generated CoT, preventing the reward prediction from being dominated by either source.
During the RL stage, we employ a dual-objective optimization approach. The GRPO loss refines the reasoning quality of the CoT, while an auxiliary BT loss continuously calibrates the scoring module. This decoupled dual-objective explicitly isolates the reward optimization from the high-variance policy updates of the reasoning chain. By ensuring the scoring module receives a direct gradient for the reward value via the BT loss, DeScore achieves superior optimization stability and faster convergence while simultaneously refining the model's reasoning capabilities.
Empirical evaluation shows that DeScore significantly outperforms existing discriminative and generative video reward models in terms of training efficiency and generalization performance. Our contribution can be summarized as follows:

\begin{itemize}[leftmargin=*]
    \item We introduce a decoupled video reward modeling paradigm that separates CoT reasoning from final reward prediction, combining the interpretability and generalization benefits of CoT reasoning, and maintains optimization stability and efficiency.
    
    \item We propose DeScore, a training-efficient and generalizable video reward model built on this paradigm with a two-stage framework: a discriminative cold start with random masking and a dual-objective RL stage that separates reasoning refinement from reward calibration.
    
    \item  Extensive experiments demonstrate that DeScore consistently outperforms state-of-the-art (SOTA) baselines, achieving stronger OOD generalization and higher training efficiency. DeScore also proves effective for post-training, leading to improved generated video quality.
\end{itemize}
    
    

\vspace{-4mm}
\section{Related Work}
\vspace{-2mm}
\noindent \textbf{Video Reward Model.} Existing video reward models mainly follow two paradigms. Discriminative methods \cite{he2024videoscore, liu2025improving} regress scalar rewards from MLLM features using objectives such as MSE or Bradley-Terry (BT) loss \cite{bradley1952rank, rao1967ties}. Although these objectives provide stable optimization, the lack of explicit reasoning makes such models prone to shortcut learning \cite{ye2025rectifying} and heavily reliant on large-scale data to achieve strong generalization. Generative methods formulate reward modeling as next-token prediction. Early works \cite{xu2026visionreward, wang2025unified} directly generated scores, lacking the reasoning capacity to handle complex scenarios. Subsequent methods \cite{wang2025unified2, he2025videoscore2, wang2024lift, wang2025vr} introduced CoT through two-stage training, \ie SFT followed by RL \cite{DeepSeekMath}. Although CoT improves interpretability and generalization, these models often suffer from training instability because reasoning and scoring are coupled within a single sampling chain. Some methods \cite{wu2025rewarddance} instead use token probabilities as rewards, but their reliance on reference videos or pairwise comparisons limits practical applicability. To address these limitations, we propose DeScore, which decouples reasoning from scoring through a ``Think-then-Score'' process, achieves both robust preference alignment and training efficiency.

\noindent \textbf{Reinforcement Learning.} 
Reinforcement learning (RL) has recently achieved strong performance across a wide range of MLLM tasks \cite{hurst2024gpt, bai2025qwen3, jaech2024openai, GPT-5, Gemini-2.5-pro}, substantially improving visual reasoning and understanding \cite{r1_reward, vlmr1, internvl3, llavar1}. Much of this progress has been driven by Group Relative Policy Optimization (GRPO) \cite{DeepSeekMath, DeepSeekR1}, which estimates advantages from the relative rewards of multiple responses to the same input. By removing the need for a separate critic model \cite{ppo}, GRPO makes RL optimization more scalable for MLLMs.  However, recent studies have identified important optimization bottlenecks in GRPO. \cite{yu2025dapo} show that ineffective prompts can produce response groups that are uniformly correct or uniformly incorrect, thereby weakening effective gradient signals and increasing training variance. Meanwhile, \cite{he2025vl} empirically show that the gradient variance of GRPO grows with sequence length, which leads training instability. These limitations are directly inherited by CoT-based video reward models \cite{he2025videoscore2, wang2025unified2, wang2025vr}, where reasoning and scoring are coupled in a single sampling chain, causing the final reward prediction to rely heavily on GRPO-based optimization. This motivates us to move beyond the standard GRPO objective and develop a more efficient optimization strategy for video reward modeling.

\vspace{-3mm}
\section{Method}
\vspace{-2mm}
\subsection{Data Collection}
\vspace{-2mm}
\label{sec:data}
We build our preference dataset by captioning diverse real-world videos and using the captions as prompts for multiple T2V models, including Gen-2 \cite{gen22023}, Pika 1.0 \cite{pika2023}, PixVerse (v1/v2) \cite{pixverse2025}, Dreamina \cite{Dreamina2025}, Luma \cite{luma2024}, Gen-3 \cite{gen22024}, and Kling \cite{kling2026}. Human annotators compare the generated pairs along five alignment dimensions: object, dynamics, environment, style, and camera movement, resulting in 22K training pairs and 1469 in-domain evaluation pairs. We then generate stage-specific CoT annotations to support our two-stage training. Qwen3-VL-8B \cite{bai2025qwen3} is used for the discriminative cold-start stage to activate the scoring module, while Gemini-2.5-Pro \cite{Gemini-2.5-pro} provides fine-grained CoTs with sub-dimension scores for the dual-objective RL stage. In both stages, we apply consistency-based filtering by retaining only CoTs whose implied preferences agree with human labels. Further details are provided in Appendix~\ref{supp:data}.

\begin{figure*}[!t]
\centering
\includegraphics[width=\textwidth]{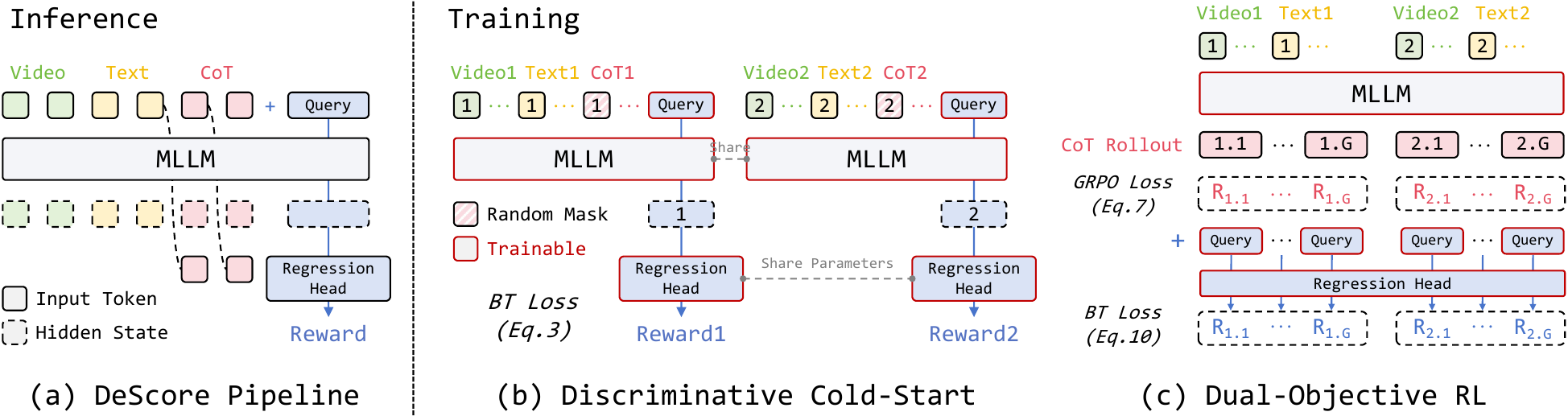}
\vspace{-5mm}
\caption{
\textbf{Our DeScore framework.}
(a) During inference, DeScore first uses an MLLM to generate CoT from the multi-modal input, then appends a learnable query token whose last hidden state is projected by a regression head into the final \textcolor{mycharblue}{\textit{video reward}}. Training follows a two-stage paradigm:
(b) In the discriminative cold-start stage, DeScore is trained with BT loss on pre-collected CoT data, where random CoT masking encourages the scoring module to use both multi-modal inputs and reasoning tokens. (c) In the dual-objective RL stage, the GRPO loss optimizes CoT rollouts guided by rule-based \textcolor{mycharpink}{\textit{rollout rewards}}, while the BT loss simultaneously calibrates the \textcolor{mycharblue}{\textit{video reward}}, decoupling reasoning refinement from reward scoring.
}
\label{fig:method}
\end{figure*}
\vspace{-3mm}
\subsection{Reward Model Learning}
\vspace{-2mm}
As illustrated in Figure~\ref{fig:method}, we propose DeScore, a decoupled reward modeling framework that achieves high generalization and training efficiency. DeScore uses Qwen3-VL-8B \cite{bai2025qwen3} as its multi-modal backbone, augmented with a scoring module comprising a learnable query token and a regression head. For a given generative video and the user instruction, the query token follows the generative CoT sequence, aggregating contextual information from multi-modal inputs and reasoning tokens via the MLLM backbone. Its hidden state is then projected by the regression head into a scalar reward. To ensure both reasoning quality and scoring accuracy, the optimization of DeScore adheres to a two-stage training paradigm. First, a discriminative cold start is performed on CoT data to enable the scoring module to effectively extract and aggregate semantic evidence from both multi-modal inputs and CoT, thereby yielding accurate scalar rewards. Subsequently, a dual-objective RL stage refines reasoning with the GRPO loss while calibrating the reward accuracy using the BT loss. User instructions for both training and inference are detailed in Appendix~\ref{supp:user_instrcution}.

\noindent \textbf{Discriminative Cold Start.} 
In this initial stage, our objective is to warm up the MLLM backbone and the scoring module, enabling them to effectively aggregate semantic information from both raw multi-modal inputs and pre-collected CoT data. 
Formally, given a generated video $\boldsymbol{v}$, a text prompt $\boldsymbol{c}$, and its corresponding pre-collected CoT $\boldsymbol{o}$, we construct the input sequence by appending a learnable query token $\texttt{[Reward]}$ to the end:
\begin{equation}
    \mathcal{X} =
    \left(
    \boldsymbol{v},
    \boldsymbol{c},
    \boldsymbol{o},
    \texttt{[Reward]}
    \right).
\end{equation}
The last hidden state of the $\texttt{[Reward]}$ token, $\boldsymbol{h}_\mathrm{reward} \in \mathbb{R}^d$, captures a condensed semantic summary of the multi-modal inputs and the reasoning process. It is then passed through the learnable regression head $\Phi$ to produce the scalar reward $s$:
\begin{equation}
s = \Phi(\boldsymbol{h}_\mathrm{reward}).
\end{equation}
To align the predicted rewards with human preferences, we employ the Bradley-Terry (BT) loss \cite{bradley1952rank}. Given a preference pair from the dataset $\mathcal{D}$, consisting of a winning sample $\boldsymbol{q}^w = \left(\boldsymbol{v}^w, \boldsymbol{c},  \boldsymbol{o}^w\right)$ and a losing sample $\boldsymbol{q}^l = \left(\boldsymbol{v}^l, \boldsymbol{c}, \boldsymbol{o}^l\right)$, the model computes their respective scores $s^w$ and $s^l$ separately with our DeScore. The training objective is defined as:
\begin{equation}\label{eq:bt_loss}
\mathcal{L}_{\mathrm{BT}} = -\mathbb{E}_{(\boldsymbol{q}^w, \boldsymbol{q}^l) \sim \mathcal{D}} \left[ \log \sigma (s^w - s^l) \right],
\end{equation}
where $\sigma(\cdot)$ denotes the sigmoid function.

To ensure that the decoupled scoring module effectively utilizes both the multi-modal video inputs and the generated CoT, preventing the module from relying solely on the CoT, we apply a random masking strategy during training. During training, the CoT $\boldsymbol{o}$ is masked with a probability $p$. In these instances, the reward $s$ is computed solely based on the raw multi-modal inputs $\left(\boldsymbol{v}, \boldsymbol{c}, \boldsymbol{o} = \varnothing \right)$. This mechanism forces DeScore to maintain a strong grounding in the original video features, ensuring that the final reward is a holistic reflection of both visual evidence and logical reasoning, thereby enhancing the robustness of the reward prediction.

\noindent \textbf{Reinforcement Learning with Dual-Objective.} 
In the second stage, we fine-tune the entire model using a dual-objective RL strategy. We employ Group Relative Policy Optimization (GRPO) to refine the reasoning quality of the CoT. However, optimizing solely for CoT quality can lead to ``reward drift'', where the scoring module loses its calibration. To mitigate this, we combine the GRPO objective with an auxiliary BT loss.
During this stage, the model first generates a CoT reasoning sequence $\boldsymbol{o}$ conditioned on the input $\hat{\boldsymbol{q}} = \left(\boldsymbol{v}, \boldsymbol{c}\right)$. The final input sequence for reward prediction is constructed as
\begin{equation}
\mathcal{X} = \left(\boldsymbol{v}, \boldsymbol{c}, \boldsymbol{o}, \texttt{[Reward]}\right).
\end{equation}
The scalar reward $s$ is then computed by passing $\mathcal{X}$ through the MLLM backbone $\Theta$ and the regression head $\Phi$:
\begin{equation} \label{eq:rl_reward}
s = \Phi(\Theta(\mathcal{X})_{\texttt{[Reward]}}) = \Phi(\boldsymbol{h}_\mathrm{reward}),
\end{equation}
where $\boldsymbol{h}_\mathrm{reward}$ is the hidden state of the $\texttt{[Reward]}$ token, aggregating information from both the multi-modal inputs and the generated CoT $\boldsymbol{o}$.

Following the standard GRPO framework, we sample a group of $G$ responses $\{\boldsymbol{o}_1, \boldsymbol{o}_2, \dots, \boldsymbol{o}_G\}$ from the old policy $\pi_{\theta_{\text{old}}}$ for each input $\hat{\boldsymbol{q}}$. The advantage $A_i$ for the $i$-th response is computed by normalizing the rewards within the group:
\begin{align}
A_{i}
= \frac{R(\boldsymbol{o}_i) - \operatorname{mean}(\{ R(\boldsymbol{o}_1), R(\boldsymbol{o}_2), \dots, R(\boldsymbol{o}_G)\})}{\operatorname{std}(\{ R(\boldsymbol{o}_1), R(\boldsymbol{o}_2), \dots, R(\boldsymbol{o}_G)\})}.
\end{align}

Let $\mathcal{Q}$ denote the human preference training set, the GRPO loss can be formulated as:
\begin{align}
\mathcal{L}_{\mathrm{GRPO}}(\theta) 
= &-\mathbb{E}_{\hat{\boldsymbol{q}} \sim \mathcal{Q}, 
\{\boldsymbol{o}_i\}_{i=1}^G \sim \pi_{\theta_{\mathrm{old}}}(\boldsymbol{o} \mid \hat{\boldsymbol{q}})}
\Bigg\{ 
\frac{1}{G} \sum_{i=1}^G 
\frac{1}{|\boldsymbol{o}_i|} 
\sum_{t=1}^{|\boldsymbol{o}_i|} 
\min \left[ 
\frac{\pi_{\theta}(o_{i,t} \mid \hat{\boldsymbol{q}}, \boldsymbol{o}_{i,<t})}
{\pi_{\theta_{\mathrm{old}}}(o_{i,t} \mid \hat{\boldsymbol{q}}, \boldsymbol{o}_{i,<t})} 
A_{i}, 
\right. \nonumber \\
&\left.
\operatorname{clip}\left( 
\frac{\pi_{\theta}(o_{i,t} \mid \hat{\boldsymbol{q}}, \boldsymbol{o}_{i,<t})}
{\pi_{\theta_{\mathrm{old}}}(o_{i,t} \mid \hat{\boldsymbol{q}}, \boldsymbol{o}_{i,<t})}, 
1-\epsilon, 
1+\epsilon 
\right) 
A_{i} 
\right] 
\Bigg\} 
+ \beta D_{\mathrm{KL}} 
\left(
\pi_{\theta} 
\| 
\pi_{\mathrm{ref}}
\right),
\end{align}
where \(t\) indexes the token position in each generated response, \(o_{i,t}\) denotes the \(t\)-th token of the \(i\)-th response \(\boldsymbol{o}_i\), and \(\boldsymbol{o}_{i,<t}\) denotes the preceding token sequence used as the autoregressive context. The clipping threshold \(\epsilon\) bounds the importance sampling ratio, while \(\beta\) controls the KL regularization strength to prevent the optimized policy \(\pi_\theta\) from deviating excessively from the reference policy \(\pi_{\mathrm{ref}}\). To improve CoT generation, the composite reward \(R(\boldsymbol{o}_i)\) is designed with three components:
\begin{equation}
R(\boldsymbol{o}_i)
=
\lambda_1 R_{\mathrm{fmt}}(\boldsymbol{o}_i)
+
\lambda_2 R_{\mathrm{qual}}(\boldsymbol{o}_i)
+
\lambda_3 R_{\mathrm{len}}(\ell_i),
\end{equation}
where $\lambda_{1}, \lambda_{2}$, and $\lambda_{3}$ are the trade-off weights, and
$\ell_i = |\boldsymbol{o}_i|$ denotes the length of the generated CoT.

\begin{itemize}[leftmargin=*]
\item \textit{Format Reward} ($R_{\mathrm{fmt}}$): Assigns $1$ if the output strictly follows the $\texttt{<think></think>}$ structure and provides a JSON-formatted sub-dimension score, otherwise $0$.
\item \textit{Quality Reward} ($R_{\mathrm{qual}}$): Measures the accuracy of the predicted sub-dimension scores against ground-truth labels: $R_{\mathrm{qual}} = N_{\mathrm{correct}} / N_{\mathrm{total}}$.
\item \textit{Length Reward} ($R_{\mathrm{len}}$): Encourages detailed reasoning while penalizing excessive verbosity or extreme brevity:
\begin{equation}
R_{\mathrm{len}}(\ell) =
\begin{cases}
0, & \ell < 500, \\[0pt]
0.2 \times \left\lfloor {\ell}/{500} \right\rfloor,
& 500 \le \ell < 2000, \\[0pt]
1, & \ell \ge 2000.
\end{cases}
\end{equation}
\end{itemize}

In addition to the GRPO objective, we apply an auxiliary BT loss to calibrate the final reward, ensuring that improvements in CoT quality consistently translate into gains in overall model performance. Given a training pair $(\hat{\boldsymbol{q}}^w, \hat{\boldsymbol{q}}^l)$ consisting of a winning and a losing sample, we generate their respective CoT rollouts, denoted as $\mathcal{O}^w = \{\boldsymbol{o}_1^w, \dots, \boldsymbol{o}_G^w\}$ and $\mathcal{O}^l = \{\boldsymbol{o}_1^l, \dots, \boldsymbol{o}_G^l\}$. While the GRPO loss is computed based on these rollouts, each response $\boldsymbol{o}_i^j$ (where $j \in \{w, l\}$) is also used to construct the input sequence $\mathcal{X}_i^j = \left(\hat{\boldsymbol{q}}^j, \boldsymbol{o}_i^j, \texttt{[Reward]}\right)$. 
We then compute the scalar reward $s_i^j$ for each rollout according to Eq.~\ref{eq:rl_reward}. The final auxiliary BT loss is defined as:
\begin{equation}
\mathcal{L}_{\mathrm{BT}}^{\mathrm{aux}} = -\mathbb{E}_{(\hat{\boldsymbol{q}}^w, \hat{\boldsymbol{q}}^l) \sim \mathcal{D}} \left[ \frac{1}{G} \sum_{i=1}^{G} \log \sigma (s^w_i - s^l_i) \right].
\end{equation}

To integrate dual objectives, the final training loss for this stage is formulated as:
\begin{equation}
\mathcal{L}_{\mathrm{total}} = \mathcal{L}_{\mathrm{GRPO}} + \alpha \mathcal{L}_{\mathrm{BT}}^{\mathrm{aux}},
\end{equation}
where $\alpha$ is a balancing coefficient to align gradient scales. This decoupled design ensures the final reward remains grounded in a stable regression rather than dominated by a coupled sampling chain.

\vspace{-3mm}
\subsection{Inference}
\vspace{-2mm}
During inference, DeScore evaluates videos via a two-step ``think-then-score'' procedure. Given a test generative video $\boldsymbol{v}$ and user prompt $\boldsymbol{c}$, the backbone $\Theta$ first autoregressively generates a detailed CoT $\boldsymbol{o}$ to analyze video quality. Subsequently, the query token $\texttt{[Reward]}$ is appended to form the sequence $\mathcal{X}_{\mathrm{inf}} = \left(\boldsymbol{v}, \boldsymbol{c}, \boldsymbol{o}, \texttt{[Reward]}\right)$. The MLLM backbone $\Theta$ processes the input sequence
$\mathcal{X}_{\mathrm{inf}}$,
and the resulting hidden state of the $\texttt{[Reward]}$ token, $\boldsymbol{h}_{\mathrm{reward}}$, is fed into the regression head $\Phi$ to produce the scalar reward $s$, integrating information from both the multi-modal inputs and the generated CoT:
\begin{equation}
s = \Phi(\Theta(\mathcal{X}_{\mathrm{inf}})_{\texttt{[Reward]}}).
\end{equation}
By decoupling scoring from reasoning, DeScore harnesses the interpretability and generalization introduced by CoT reasoning while shielding the training process from the optimization instability of a coupled sampling chain.

\section{Experiments}
\subsection{Experimental Setups.}
\noindent \textbf{Implementation.}
We use Qwen3-VL-8B \cite{bai2025qwen3} as the backbone of DeScore. In the discriminative cold-start stage, the model is fine-tuned with LoRA (rank 64) for two epochs using AdamW, with a learning rate of $2\times10^{-6}$, weight decay of 0.01, and batch size of 32. The resulting checkpoint is then used to initialize the RL stage. In the RL stage, we optimize DeScore with GRPO and auxiliary BT losses, using coefficients of 1.0 and 0.005, respectively. GRPO is trained with a learning rate of $1\times10^{-6}$, group size $G=8$, 65 training steps, a rollout batch size of 128, and a mini-batch size of 32. Video inputs are processed at 2 fps during both training and inference, with more details shown in Appendix~\ref{supp:imple}.


\noindent \textbf{Baselines and Benchmarks.}
We evaluate DeScore against discriminative baselines, including VideoScore \cite{he2025videoscore2} and VideoAlign \cite{liu2025improving}, and generative baselines, including VisionReward \cite{xu2026visionreward}, UnifiedReward \cite{wang2025unified}, UnifiedReward-Thinking \cite{wang2025unified2}, and VideoScore2 \cite{he2025videoscore2}, where the latter two generate CoT before the final reward. Experiments are conducted on an in-domain preference dataset with 1,469 pairs and two OOD benchmarks: GenAI \cite{jiang2024genai}, containing 1.9k low-resolution short-video pairs from early T2V models, and VideoGen-Bench \cite{liu2025improving, zeng2024dawn}, containing 26.5k higher-resolution and longer-video pairs from current SOTA models. Detailed settings are provided in Appendix~\ref{supp:eval}.

\vspace{-2mm}
\subsection{Main Results}
\vspace{-1mm}
\label{exp:main_results}
\begin{table}
\centering
\small  
\renewcommand{\arraystretch}{1.2}  
\caption{\textbf{Main Experiments on Video Reward Benchmarks.} We evaluate preference accuracy on in-domain and OOD benchmarks. \textbf{Bold} and \underline{underlined} values denote the best and second-best results. DeScore achieves the strongest overall performance, demonstrating superior generalization.}
\vspace{1mm}
\resizebox{\hsize}{!}{\begin{tabular}{cccccc}
\toprule
\multicolumn{1}{c|}{}                                 & \multicolumn{1}{c|}{}                                     & \multicolumn{4}{c}{\textbf{OOD Video Reward Benchmark}}                                                                                                                      \\ \cline{3-6} 
\multicolumn{1}{c|}{}                                 & \multicolumn{1}{c|}{}                                     & \multicolumn{2}{c|}{\textbf{GenAI}}                                                             & \multicolumn{2}{c}{\textbf{VideoGen-Bench}}                                \\ \cline{3-6}
\multicolumn{1}{c|}{\multirow{-3}{*}{\textbf{Model}}} & \multicolumn{1}{c|}{\multirow{-3}{*}{\textbf{In-domain Dataset}}} & \textbf{Acc w/ Tie} & \multicolumn{1}{c|}{\textbf{Acc w/o Tie}}    &  \textbf{Acc w/ Tie} & \textbf{Acc w/o Tie}    \\ \hline
\multicolumn{6}{c}{\textit{Discriminative Video Reward Model}}                                                                                                                                                                                                                                          \\ \midrule
\multicolumn{1}{c|}{VideoScore \cite{he2024videoscore}}                       & \multicolumn{1}{c|}{0.552}                                & 0.490                                                     & \multicolumn{1}{c|}{0.720}          & 0.372                                                     & 0.503          \\
\multicolumn{1}{c|}{VideoAlign \cite{liu2025improving}}                       & \multicolumn{1}{c|}{\underline{0.642}}                                & 0.494                                                     & \multicolumn{1}{c|}{\underline{0.728}}          & \cellcolor[HTML]{FFFFFF}\underline{0.538}                            & \underline{0.722}          \\ \midrule
\multicolumn{6}{c}{\textit{Generative Video Reward Model}}                                                                                                                                                                                                                                                \\ \midrule
\multicolumn{1}{c|}{VisionReward \cite{xu2026visionreward}}                     & \multicolumn{1}{c|}{0.571}                                & {\underline{0.525}}                                               & \multicolumn{1}{c|}{0.724}          & 0.465                                                     & 0.611          \\
\multicolumn{1}{c|}{UnifiedReward \cite{wang2025unified}}                    & \multicolumn{1}{c|}{0.492}                                & 0.458                                                     & \multicolumn{1}{c|}{0.686}         & 0.303                                                     & 0.564          \\
\multicolumn{1}{c|}{UnifiedReward-Thinking \cite{wang2025unified2}}             & \multicolumn{1}{c|}{0.578}                                & \textbf{0.548}                                            & \multicolumn{1}{c|}{0.709}         & \cellcolor[HTML]{FFFFFF}0.428                             & 0.582          \\
\multicolumn{1}{c|}{VideoScore2 \cite{he2025videoscore2}}                      & \multicolumn{1}{c|}{0.617}                                & 0.391                                                     & \multicolumn{1}{c|}{0.616}          & 0.301                                                     & 0.497          \\ \midrule
\multicolumn{6}{c}{\textit{Our Video Reward Model}}                                                                                                                                                                                                                                                       \\ \midrule
\multicolumn{1}{c|}{\textbf{DeScore (Ours)}}                            & \multicolumn{1}{c|}{\textbf{0.734}}                       & 0.504                                                     & \multicolumn{1}{c|}{\textbf{0.765}} & \textbf{0.568}                                            & \textbf{0.768} \\ \bottomrule
\end{tabular}}
\label{main_exp}
\end{table}

\begin{figure*}[t]
\centering
\includegraphics[width=\textwidth]{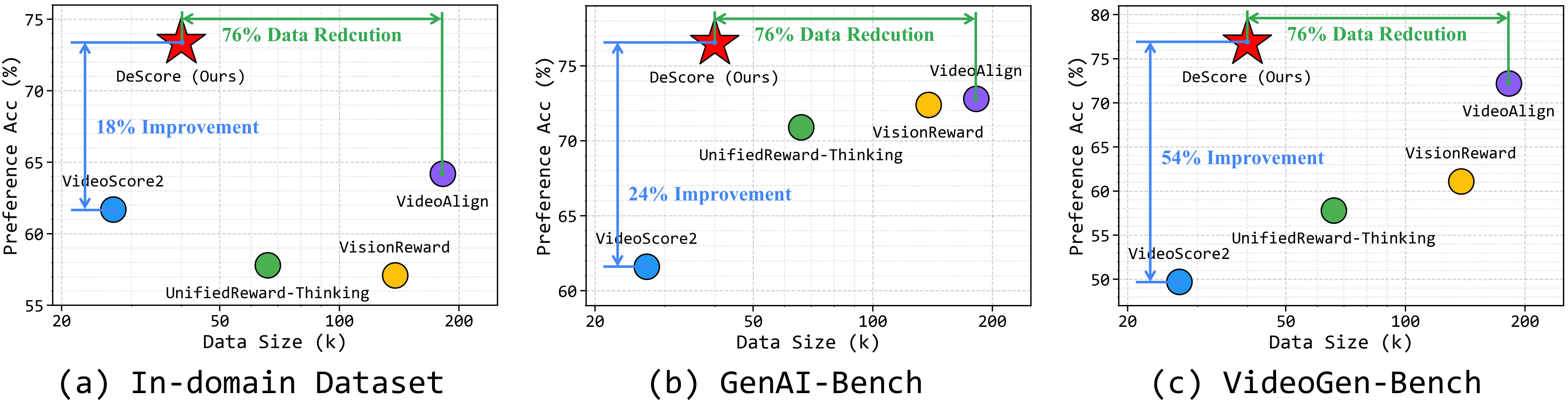}
\vspace{-6mm}
\caption{\textbf{Performance vs. Training Data Size.} DeScore (red star) consistently outperforms existing models by a large margin while requiring only a fraction of the training data, highlighting its extreme training efficiency and robust semantic understanding.}
\vspace{-3mm}
\label{fig:efficient}
\end{figure*}

\noindent \textbf{High Generalization.}
To evaluate the reward accuracy of DeScore, we compare it with several state-of-the-art baselines across both in-domain and OOD benchmarks. As shown in Table~\ref{main_exp}, DeScore outperforms both discriminative and generative methods on the in-domain preference dataset, and this advantage consistently extends to OOD settings. On GenAI-Bench, although UnifiedReward-Thinking achieves a slightly higher Acc w/ Tie, DeScore obtains the best result on the more informative Acc w/o Tie metric (0.765), demonstrating strong performance on videos generated by early-stage T2V models. On VideoGen-Bench, DeScore reaches 0.768 in Acc w/o Tie, substantially outperforming the strongest discriminative baseline, VideoAlign (0.722), and the best generative baseline (0.582). These results show that DeScore achieves state-of-the-art performance across most benchmarks, validating the effectiveness of our ``think-then-score'' paradigm in combining the interpretability of generative reasoning with strong generalization.
\begin{figure*}[!t]
\centering
\includegraphics[width=\textwidth]{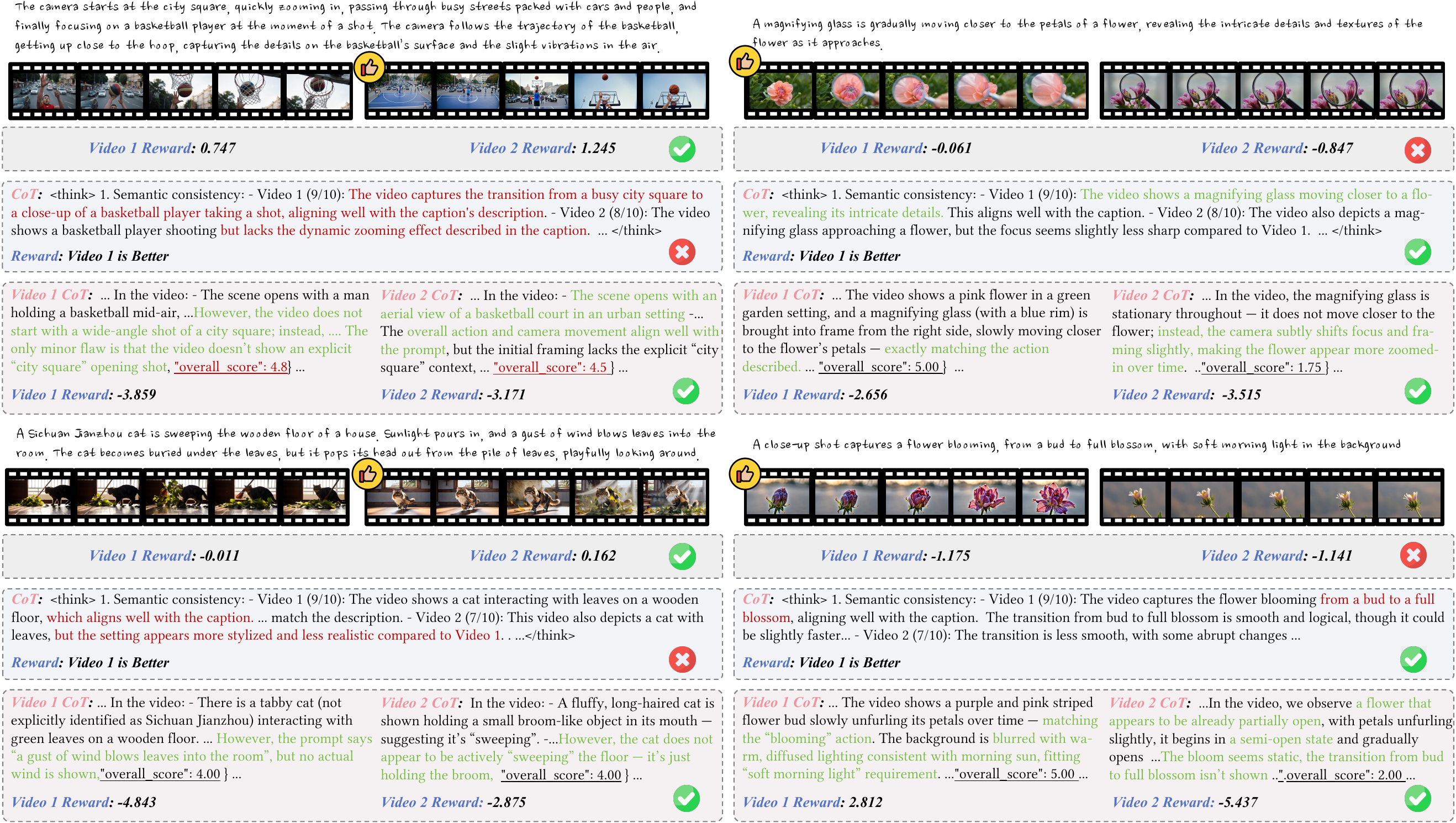}
\vspace{-4mm}
\caption{\textbf{Qualitative Comparison of Different Video Reward Models.} 
We compare \mybox{mygray}{VideoAlign}, 
\mybox{myblue}{UnifiedReward-Thinking}, 
and \mybox{mypink}{our DeScore} 
with \textcolor{mygreen}{high-} and \textcolor{myred}{low-}quality reasoning 
within these responses. DeScore consistently yields accurate rewards and robust 
reasoning across varied prompts, demonstrating its superior interpretability 
and generalization.}
\vspace{-3mm}
\label{fig:case}
\end{figure*}

\noindent \textbf{Training Efficiency.} As shown in Figure~\ref{fig:efficient}, DeScore achieves a superior performance-to-data trade-off across all evaluated benchmarks, outperforming SOTA reward models while reducing training data by 76$\%$. These results show that the decoupled ``think-then-score'' paradigm improves sample efficiency by leveraging fine-grained semantic rationales, bypassing the heavy data dependency of traditional discriminative video reward models.

\noindent \textbf{Qualitative Performance.} As shown in Figure~\ref{fig:case}, we compare DeScore with two representative baselines, the discriminative model VideoAlign and the generative model UnifiedReward-Thinking, across different scenarios including camera motion (row 1) and dynamic motion (row 2). DeScore consistently performs well in all cases. In particular, UnifiedReward-Thinking often fails when the generated CoT is of low quality, since its final score is directly coupled with the reasoning path. In contrast, DeScore decouples reasoning from scoring and applies random masking during Stage 1, encouraging the scoring module to jointly leverage multimodal inputs and CoT rather than relying solely on generated text. This yields two clear advantages: (1)~\underline{\textit{error tolerance}}, where DeScore remains accurate even with imperfect CoT (\eg top-left case), and (2)~\underline{\textit{fine-grained discrimination}}, where it can still produce differentiated scores when generative models output identical reward tokens (\eg bottom-left case). These results show that DeScore effectively combines reasoning interpretability with robust reward prediction.

\vspace{-1mm}
\subsection{Ablation Study}
\vspace{-1mm}
\begin{table}
  \centering
  \renewcommand{\arraystretch}{1.3}  
  \caption{\textbf{Ablation Study of DeScore Components across Different Training Stages.} We investigate the impact of CoT and Random Masking during Stage 1 (top), and the contributions of Cold Start and the auxiliary BT Loss to Stage 2 (bottom). \textbf{Bold} values denote the best performance in each stage.}
  \vspace{1mm}
\resizebox{\hsize}{!}{\begin{tabular}{c|cc|c|cccc}
\hline
\multirow{3}{*}{\textbf{Training Satge}} & \multicolumn{2}{c|}{\multirow{2}{*}{\textbf{Setting}}} & \multirow{3}{*}{\textbf{In-domain Dataset}} & \multicolumn{4}{c}{\textbf{OOD Video Reward Benchmark}}                                                       \\ \cline{5-8} 
                                         & \multicolumn{2}{c|}{}                                  &                                     & \multicolumn{2}{c|}{\textbf{GenAI}}                             & \multicolumn{2}{c}{\textbf{VideoGen-Bench}} \\ \cline{2-3} \cline{5-8} 
                                         & \textbf{CoT}           & \textbf{Random Mask}          &                                     & \textbf{Acc w/ Tie} & \multicolumn{1}{c|}{\textbf{Acc w/o Tie}} & \textbf{Acc w/ Tie}  & \textbf{Acc w/o Tie} \\ \hline
Stage 1 (Discriminative Version)       & $\times$                      & $\times$                             & 0.588                               & 0.379               & \multicolumn{1}{c|}{0.585}                & 0.427                & 0.589                \\
Stage 1      & \checkmark                    & $\times$                             & 0.615                               & 0.417               & \multicolumn{1}{c|}{0.636}                & 0.480                & 0.654                \\
Stage 1 (Default)               & \checkmark                    & \checkmark                           & \textbf{0.656}                      & \textbf{0.449}      & \multicolumn{1}{c|}{\textbf{0.685}}       & \textbf{0.489}       & \textbf{0.672}       \\ \hline
                                         & \multicolumn{2}{c|}{\textbf{Setting}}                           & \multirow{2}{*}{\textbf{In-domain Dataset}}          & \multicolumn{2}{c|}{\textbf{GenAI}}                                      & \multicolumn{2}{c}{\textbf{VideoGen-Bench}} \\ \cline{2-3} \cline{5-8} 
                                         & \textbf{Cold Start}             & \textbf{BT Loss}                       &                                     & \textbf{Acc w/ Tie} & \multicolumn{1}{c|}{\textbf{Acc w/o Tie}} & \textbf{Acc w/ Tie}  & \textbf{Acc w/o Tie} \\ \hline
Stage 2 (Generative Version)               & $\times$                      & $\times$                             & 0.683                               & 0.461               & \multicolumn{1}{c|}{0.697}                & 0.526                & 0.712                \\
Stage 2                & \checkmark                    & $\times$                             & 0.691                               & 0.471               & \multicolumn{1}{c|}{0.754}                & 0.445                & 0.648                \\
Stage 2             & $\times$                      & \checkmark                           & 0.720                               & 0.491               & \multicolumn{1}{c|}{0.748}                & 0.547                & 0.741                \\
Stage 2 (Default)                & \checkmark                    & \checkmark                           & \textbf{0.734}                      & \textbf{0.504}      & \multicolumn{1}{c|}{\textbf{0.765}}       & \textbf{0.568}       & \textbf{0.768}       \\ \hline
\end{tabular}}
\vspace{-3mm}
\label{tab:ablation}
\end{table}
To evaluate the contribution of each component and setting in DeScore, we perform ablation studies on our in-domain dataset and two OOD benchmarks. The results are summarized in Table~\ref{tab:ablation}.

\noindent \textbf{Effectiveness of Stage 1 Components.} Table~\ref{tab:ablation} (top) illustrates the impact of the CoT and random masking mechanism during the discriminative cold-start phase. Incorporating CoT leads to a significant performance boost across all benchmarks, achieving an accuracy improvement of $2.7\%$ (from 0.588 to 0.615) on the in-domain dataset, alongside substantial gains on OOD benchmarks (where Acc w/o Tie increases by $5.1\%$ on GenAI and $6.5\%$ on VideoGen-Bench). This confirms that explicit rationales improve semantic understanding and reward prediction. Random masking further improves performance on the in-domain dataset ($0.615 \to 0.656$), GenAI ($0.636 \to 0.685$), and VideoGen-Bench ($0.654 \to 0.672$). To validate our hypothesis that random masking encourages the model to jointly leverage multi-modal inputs and CoT, rather than over-relying on reasoning tokens, we visualize the top 150 tokens receiving the highest attention from the final reward query. As shown in Figure~\ref{fig:visual} in Appendix~\ref{supp:visual}, random masking causes the query token to attend to both multi-modal inputs and CoT, rather than relying solely on reasoning tokens with better reward performance.

\noindent \textbf{Effectiveness of Stage 2 Components.} Table~\ref{tab:ablation} (bottom) evaluates the contributions of cold-start initialization and the dual-objective optimization in the reinforcement learning (RL) stage.We observe that training with only the GRPO loss leads to a significant performance drop, with accuracy on VideoGen-Bench declining from $0.768$ to $0.648$. We attribute this to a misalignment between reasoning and scoring. While GRPO improves reasoning quality, it neglects the scoring module, leading the model to sacrifice reward accuracy for better rationales. In contrast, introducing the auxiliary BT loss effectively calibrates the reward output. This ensures that improvements in CoT quality consistently translate into gains in overall model performance. Furthermore, it is worth noting that even without cold-start initialization, the dual-objective RL training still achieves respectable performance across benchmarks, though it remains slightly lower than the default setting. These results highlight the robustness of our design, confirming that the dual-objective training remains effective even without a highly optimized starting point.

\noindent \textbf{Comparison on Reward Modeling Paradiagm.} 
We compare DeScore with two representative variants: a discriminative version and a generative version, whose training details are provided in Appendix~\ref{supp:ablation}. The discriminative version (Table~\ref{tab:ablation}, row 1) removes CoT and predicts rewards solely from multi-modal inputs using a regression head, similar to VideoAlign. The generative version (Table~\ref{tab:ablation}, bottom row 1) follows the standard two-stage pipeline of SFT and GRPO, and predicts rewards via next-token generation, similar to VideoScore2. Both variants underperform DeScore across all benchmarks. In particular, the discriminative variant performs notably worse, indicating limited generalization and a stronger reliance on data scaling to maintain accuracy. This is further supported by the efficiency analysis in Figure~\ref{fig:efficient}, which shows that it requires substantially more training data to achieve comparable performance. Figure~\ref{fig:intro} (c) further compares the optimization stability of the two variants, showing that training with BT loss yields a smoother and more consistent improvement in preference accuracy, whereas GRPO exhibits pronounced fluctuations. 

\subsection{Improving Video Generation}
\begin{wraptable}{r}{0.50\textwidth}
\vspace{-4.4mm}
\centering
\renewcommand{\arraystretch}{1.3}  
\caption{\textbf{Comparison of Reward Models for Improving Video Generation Quality on VBench.} DeScore consistently improves the quality of generated videos under two different post-training paradigms.}
\vspace{-1mm}
\resizebox{\hsize}{!}{\begin{tabular}{c|ccccc}
\hline
Model                    & SC$\uparrow$    & BC$\uparrow$    & AQ$\uparrow$    & IQ $\uparrow$   & DD $\uparrow$   \\ \hline
Wan-2.1-1.3B             & 0.951 & 0.961 & 0.547 & 0.669 & 0.527 \\ \hline
w/ Longcat-GRPO & 0.969 & 0.973 & 0.645 & 0.706 & 0.541 \\ \hline
w/ Flow-DPO              & 0.969 & 0.972 & 0.615 & 0.700 & 0.542 \\ \hline
\end{tabular}}
\label{tab:video_grpo}
\vspace{-3mm}
\end{wraptable}
To further demonstrate the effectiveness of DeScore for improving video generation, we integrate it into two representative post-training frameworks, Longcat-GRPO \cite{team2025longcat} and Flow-DPO \cite{liu2025improving}, built on Wan-2.1-1.3B \cite{wan2025wan}, and evaluate the resulting models on VBench \cite{huang2024vbench}. Detailed settings are provided in Appendix~\ref{supp:ablation}. As shown in Table~\ref{tab:video_grpo}, DeScore consistently improves generation quality under both frameworks, yielding gains in subject consistency (SC), background consistency (BC), aesthetic quality (AQ), image quality (IQ), and dynamic degree (DD). These results show that DeScore serves as an effective reward model for post-training higher-quality video generators. We further provide qualitative comparisons in Figure~\ref{fig:grpo_case} of Appendix~\ref{supp:video_grpo}, which illustrate that DeScore improves prompt fidelity and video quality across diverse scenarios.

\section{Conclusion}
In this work, we introduced DeScore, a decoupled ``Think-then-Score'' framework for video reward modeling. By decoupling CoT reasoning from final reward prediction, DeScore retains the interpretability and generalization benefits of explicit reasoning while avoiding the optimization instability caused by coupled reasoning and scoring in generative reward models. With a two-stage training framework, DeScore achieves stable and efficient optimization and consistently outperforms existing discriminative and generative baselines on both in-domain and OOD benchmarks, while requiring substantially less training data. Moreover, DeScore further improves generated video quality when applied to post-training and test-time scaling. These results highlight decoupled reasoning and scoring as a promising paradigm for training-efficient and generalizable video reward modeling.

\bibliographystyle{plain}
\bibliography{main}


\appendix

\onecolumn
\section{Analysis on Optimization Direction}
\label{supp:direct}
In generative reward models, the final reward, whether a point-wise score or a pair-wise preference, is formulated as a discrete token $s \in \mathcal{V}$. During the supervised fine-tuning (SFT) stage, optimization heavily relies on the cross-entropy (CE) loss for next token prediction, formulated as $\mathcal{L}_{CE} = -\log \pi_\theta(s^* | \hat{\boldsymbol{q}}, \boldsymbol{o})$, where $s^*$ is the target score token. However, this formulation treats rewards as orthogonal classes, penalizing categorical mismatches while entirely ignoring ordinal distance during reward optimization (\eg, mispredicting a 5 as a 4 incurs a similar penalty as predicting a 1). This fundamental limitation persists into the reinforcement learning (RL) stage. Optimization typically relies on policy gradients (\eg, GRPO), where the gradient with respect to parameters $\theta$ is approximated as $ \hat{g}_{GRPO} \propto A \cdot \nabla\theta \log \pi_\theta(s | \hat{\boldsymbol{q}}, \boldsymbol{o}) $. Here, $A$ represents the advantage, $\hat{\boldsymbol{q}}$ denotes the multi-modal inputs, and $\boldsymbol{o}$ represents the generated CoT. Critically, this categorical treatment lacks numerical directionality. When a predicted reward token is suboptimal, the policy gradient merely suppresses its probability mass without indicating the sign of the error—\ie, whether the score should be increased or decreased to reach the ground truth. This non-directional optimization forces the model to explore the discrete token space blindly, making it difficult to provide clear guidance for distinguishing different quality levels.

Conversely, the Bradley-Terry (BT) loss models the reward as a continuous scalar. For a chosen and rejected pair $(\hat{\boldsymbol{q}}^w, \hat{\boldsymbol{q}}^l)$ with corresponding scores $s_w$ and $s_l$, the loss is formulated as $\mathcal{L}_\mathrm{BT} = -\log \sigma(s^w - s^l)$. The gradients with respect to the continuous rewards are:
\begin{equation}
     \frac{\partial \mathcal{L}_\mathrm{BT}}{\partial s^w} = -(1 - \sigma(s^w - s^l)), \quad \frac{\partial \mathcal{L}_\mathrm{BT}}{\partial s^l} = 1 - \sigma(s^w -s^l) 
\end{equation}
 Unlike policy gradients, the BT loss provides a deterministic, push-and-pull scalar force. The magnitude of this gradient is directly proportional to the current margin error ($1 - \sigma(\cdot)$). Furthermore, this formulation inherently acts as a built-in curriculum learning mechanism: it imposes substantial gradients on hard cases ($s^w \approx s^l$) and adaptively decays for easy cases ($s^w \gg s^l$), thereby ensuring highly efficient optimization.

\section{Analysis on Gradient Variance}
\label{supp:gradient}

Let $\pi_\theta$ denote an autoregressive MLLM policy $\Theta$ parameterized by $\theta$.
Given a multimodal input $\hat{\boldsymbol q}=(\boldsymbol v,\boldsymbol c)$, the policy samples a token sequence $\boldsymbol o_i=(o_{i,1},\dots,o_{i,T_i})$, where $o_{i,t}\in\mathcal V$. 

For the coupled generative paradigm, we consider a simplified policy-gradient form of GRPO. Given a group of $G$ sampled responses $\{\boldsymbol o_i\}_{i=1}^{G}$ for the same input $\hat{\boldsymbol q}$, we focus on one sampled response and omit the sampling-chain index $i$ for notational simplicity. The gradient contribution of a single sampling chain can then be written as
\begin{equation}
  \hat{g}_{\mathrm{GRPO}}
  \propto
  \hat A
  \sum_{t=1}^{T}
  \nabla_\theta
  \log \pi_\theta
  \left(
    o_t
    \mid
    \hat{\boldsymbol q}, \boldsymbol{o}_{<t}
  \right),
  \label{eq:grpo_g}
\end{equation}
where $\hat A$ denotes the group-normalized advantage of the sampled response $\boldsymbol o$. \cite{he2025vl} empirically show that gradient variance increases with the length of the response trajectory. We further analyze this phenomenon from a theoretical perspective.

\subsection*{Theorem: GRPO Gradient Variance Scales as $\Omega(T)$}

We begin by establishing the variance properties of the cumulative score function $\boldsymbol{S}$ under the standard autoregressive generation process, independent of any reward signal.

\begin{lemma}
\label{lem:mds_variance}
Under the autoregressive generation process $\boldsymbol{o} \sim \pi_\theta$, let $\mathcal{H}_t = (\hat{\boldsymbol q}, o_1, \dots, o_t)$ denote the trajectory history up to step $t$. The per-step score function $\boldsymbol{s}_t = \nabla_\theta \log \pi_\theta(o_t \mid \hat{\boldsymbol{q}}, \boldsymbol{o}_{<t})$ forms a Martingale Difference Sequence (MDS) conditioned on the history $\mathcal{H}_{t-1}$. Consequently, the variance of the cumulative score function $\boldsymbol{S} = \sum_{t=1}^T \boldsymbol{s}_t$ satisfies:
\[
  \Var[\boldsymbol{S}] \geq T \cdot G_0,
\]
where $G_0 \triangleq \min_t \mathbb{E}[\|\boldsymbol{s}_t\|^2] > 0$ is the minimum per-step variance.
\end{lemma}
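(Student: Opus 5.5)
We prove the martingale-difference property first and then expand the variance of $\boldsymbol{S}$.

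\textbf{Step 1: the MDS property.} Fix $t$ and condition on $\mathcal{H}_{t-1}$. The token $o_t$ is drawn from $\pi_\theta(\cdot \mid \hat{\boldsymbol q}, \boldsymbol o_{<t})$. We assume a finite vocabulary and smoothness in $\theta$, so differentiation and summation commute. The score identity then gives
\begin{equation*}
\mathbb{E}[\boldsymbol s_t \mid \mathcal{H}_{t-1}] = \sum_{o\in\mathcal V} \pi_\theta(o\mid \mathcal H_{t-1})\,\nabla_\theta \log \pi_\theta(o\mid \mathcal H_{t-1}) = \nabla_\theta \sum_{o\in\mathcal V}\pi_\theta(o\mid\mathcal H_{t-1}) = \nabla_\theta 1 = \boldsymbol 0 .
\end{equation*}
Since $\boldsymbol s_t$ is $\mathcal H_t$-measurable, $(\boldsymbol s_t)$ is an MDS with respect to the filtration $(\mathcal H_t)$. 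Taking total expectation gives $\mathbb{E}[\boldsymbol s_t]=\boldsymbol 0$, hence $\mathbb{E}[\boldsymbol S]=\boldsymbol 0$. The quantity $\mathbb{E}\|\boldsymbol s_t\|^2$ is therefore exactly the per-step variance, in the trace sense $\operatorname{tr}\Var[\boldsymbol s_t]$.

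\textbf{Step 2: orthogonality of increments.} We interpret $\Var[\boldsymbol S]$ as the scalar total variance $\mathbb{E}\|\boldsymbol S-\mathbb{E}\boldsymbol S\|^2 = \mathbb{E}\|\boldsymbol S\|^2$. Expanding the square gives
\begin{equation*}
\mathbb{E}\|\boldsymbol S\|^2 = \sum_{t=1}^T \mathbb{E}\|\boldsymbol s_t\|^2 + 2\sum_{u<t}\mathbb{E}[\boldsymbol s_u^\top \boldsymbol s_t].
\end{equation*}
For each cross term with $u<t$, $\boldsymbol s_u$ is $\mathcal H_{t-1}$-measurable. The tower property then gives
\begin{equation*}
\mathbb{E}[\boldsymbol s_u^\top \boldsymbol s_t] = \mathbb{E}\big[\boldsymbol s_u^\top\,\mathbb{E}[\boldsymbol s_t\mid \mathcal H_{t-1}]\big] = 0 .
\end{equation*}
This requires square-integrability of each $\boldsymbol s_t$. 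That holds automatically for a finite vocabulary, provided the sampled tokens have positive probability, which they do almost surely. It follows that $\Var[\boldsymbol S] = \sum_{t=1}^T \mathbb{E}\|\boldsymbol s_t\|^2$.

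\textbf{Step 3: the lower bound.} Each summand is at least $G_0 = \min_t \mathbb{E}\|\boldsymbol s_t\|^2$, so $\Var[\boldsymbol S]\ge T\,G_0$. The statement posits that $G_0>0$. This amounts to non-degeneracy: at every step the conditional distribution is not locally constant in $\theta$ along the sampled token. We take it as the stated assumption rather than derive it. If the matrix form of $\Var$ is intended, the same argument gives $\Var[\boldsymbol S]=\sum_t \mathbb{E}[\boldsymbol s_t\boldsymbol s_t^\top]$ in the Loewner order, and taking the trace recovers the scalar bound.

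\textbf{Main obstacle.} The delicate point is Step 2, not the algebra. One must be careful about what $T$ is. If the length is random, set by an EOS stopping time, then $T$ in the bound must be treated as fixed or deterministic. Otherwise the orthogonality argument has to run through the stopped sum, using optional stopping and Wald-type identities, and the resulting bound becomes $\mathbb{E}[T]\cdot G_0$ under a uniform per-step lower bound. I would first prove the fixed-$T$ version as stated, then remark on this extension.
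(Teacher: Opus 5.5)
Your proof is correct and takes essentially the same route as the paper. The score-function identity gives $\mathbb{E}[\boldsymbol{s}_t \mid \mathcal{H}_{t-1}] = \boldsymbol{0}$, the tower property makes the cross terms $\mathbb{E}[\langle \boldsymbol{s}_u, \boldsymbol{s}_t\rangle]$ vanish, and the remaining sum of per-step second moments is at least $T \cdot G_0$. Your extra care in reading $\Var$ as the trace (total) variance, checking square-integrability, and fixing $T$ makes explicit assumptions the paper leaves implicit, without changing the argument.
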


\begin{proof}
Specifically, the score function identity gives:
\begin{align}
    \mathbb{E}[\boldsymbol{s}_t \mid \mathcal{H}_{t-1}]
  &=  \mathbb{E}_{o_t \sim \pi_\theta(\cdot|\hat{\boldsymbol{q}}, \boldsymbol{o}_{<t})}
    \!\bigl[\nabla_\theta \log \pi_\theta(o_t \mid \hat{\boldsymbol{q}}, \boldsymbol{o}_{<t})\bigr] \nonumber \\
  &= \sum_{o_{t}}\pi_\theta(o_t \mid \hat{\boldsymbol{q}}, \boldsymbol{o}_{<t})\nabla_\theta \log \pi_\theta(o_t \mid \hat{\boldsymbol{q}}, \boldsymbol{o}_{<t}) \nonumber \\
  &= \nabla_\theta \sum_{o_t} \pi_\theta(o_t \mid \hat{\boldsymbol{q}}, \boldsymbol{o}_{<t})
  = \nabla_\theta \,1 = 0.
\end{align}

By the MDS property, cross-covariances vanish exactly:
\[
\mathbb E[\langle \boldsymbol{s}_i,\boldsymbol{s}_j\rangle]
=
\mathbb E\left[
\mathbb E[\langle \boldsymbol{s}_i,\boldsymbol{s}_j\rangle \mid \mathcal H_{j-1}]
\right]
=
\mathbb E\left[
\left\langle
\boldsymbol{s}_i,
\mathbb E[\boldsymbol{s}_j\mid \mathcal H_{j-1}]
\right\rangle
\right]
=0.
\]
Therefore, the variance of $\boldsymbol{S}$ decomposes perfectly into the sum of per-step variances:
\begin{equation}
  \Var[\boldsymbol{S}]
  = \Var\!\left[\sum_{t=1}^T \boldsymbol{s}_t\right]
  = \sum_{t=1}^T \Var[\boldsymbol{s}_t]
  \geq \sum_{t=1}^T G_0 = T \cdot G_0.
  \label{eq:var-S}
\end{equation}
\end{proof}

\begin{theorem}
\label{thm:grpo}
Under Lemma~\ref{lem:mds_variance}, the variance of the GRPO gradient estimator satisfies
\[
  \Var\bigl[\hat{g}_{\mathrm{GRPO}}\bigr] \;=\; \Omega(T).
\]
\end{theorem}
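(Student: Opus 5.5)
We write $\hat g_{\mathrm{GRPO}} = \hat A\,\boldsymbol S$, with $\boldsymbol S=\sum_{t=1}^T \boldsymbol s_t$ the cumulative score function from Lemma~\ref{lem:mds_variance}. The plan is to transfer the linear-in-$T$ lower bound on $\Var[\boldsymbol S]$ to the product $\hat A\boldsymbol S$. For this we need mild regularity on the advantage. We will assume $\hat A$ is bounded below in magnitude on a set of non-negligible probability. Concretely, there exist constants $a_0>0$ and $c_0>0$, not depending on $T$, such that
\begin{equation*}
\mathbb E\bigl[\hat A^2 \|\boldsymbol S\|^2\bigr] \ge a_0^2\, c_0\, \mathbb E\bigl[\|\boldsymbol S\|^2\bigr].
\end{equation*}
The simplest sufficient condition is that $\hat A$ is independent of $\boldsymbol S$, or at least uncorrelated with $\|\boldsymbol S\|^2$, with $\mathbb E[\hat A^2]\ge a_0^2$. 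This holds in the idealized setting where the group-normalized advantage has unit second moment. Since this assumption is implicit in the statement ``under Lemma~\ref{lem:mds_variance}'', we will state it explicitly.

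The key steps are as follows. First, decompose
\begin{equation*}
\Var[\hat g_{\mathrm{GRPO}}]=\mathbb E\bigl[\|\hat A\boldsymbol S\|^2\bigr]-\bigl\|\mathbb E[\hat A\boldsymbol S]\bigr\|^2 .
\end{equation*}
Second, lower-bound the first term. Under the independence or decorrelation assumption it equals $\mathbb E[\hat A^2]\,\mathbb E\|\boldsymbol S\|^2$. Because $\mathbb E[\boldsymbol S]=0$ by the MDS property, $\mathbb E\|\boldsymbol S\|^2=\Var[\boldsymbol S]\ge T G_0$ by Lemma~\ref{lem:mds_variance}. Third, upper-bound the subtracted mean term. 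Under independence, $\mathbb E[\hat A\boldsymbol S]=\mathbb E[\hat A]\,\mathbb E[\boldsymbol S]=0$, so the term vanishes. In the general case, $\mathbb E[\hat A\boldsymbol S]$ is the true policy gradient of the expected advantage-weighted objective. Its norm is bounded by a constant $C$ independent of $T$ whenever the reward is bounded and the policy is smooth, so it is $O(1)$. Combining the three steps gives $\Var[\hat g_{\mathrm{GRPO}}]\ge a_0^2 T G_0 - C^2=\Omega(T)$.

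The main obstacle is the second step in the dependent case. In reality, $\hat A$ is a function of the whole sampled trajectory and is therefore correlated with the per-step scores. If that correlation is adversarial, $\hat A$ could be small exactly when $\|\boldsymbol S\|$ is large.

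To handle this, we would first try to use the fact that in reward modeling the advantage is determined essentially by a few tokens, namely the final score or sub-dimension scores. We would then condition on $\hat A$. Conditional on the reward-determining tokens, the remaining $T-O(1)$ reasoning-token scores still satisfy the MDS decomposition, each with second moment at least $G_0$, provided $G_0$ is interpreted conditionally. This yields $\mathbb E[\|\boldsymbol S\|^2\mid \hat A]\ge (T-k)G_0$ for a constant $k$. Taking expectations against $\hat A^2$ then recovers $\mathbb E[\hat A^2](T-k)G_0$ without requiring independence.

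If this conditional argument proves too delicate, we would fall back on the explicit independence assumption described above. That assumption is standard in such heuristic analyses, and we would state it as a hypothesis of the theorem.
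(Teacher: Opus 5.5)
Your main line is exactly the paper's proof: assume $\hat A$ and $\boldsymbol S$ are independent, use $\mathbb E[\hat A^2]=1$ from the group z-score normalization, and use $\mathbb E\|\boldsymbol S\|^2=\Var[\boldsymbol S]\ge TG_0$ from Lemma~\ref{lem:mds_variance}; your remark that the mean term vanishes because $\mathbb E[\boldsymbol S]=0$ even sharpens the paper's ``$\propto$'' into the identity $\Var[\hat A\boldsymbol S]=\mathbb E[\hat A^2]\,\Var[\boldsymbol S]$. Drop the dependent-case refinement, though: conditioning on the final reward-determining tokens destroys the martingale-difference property (by Bayes, $\mathbb E[\boldsymbol s_t\mid\mathcal H_{t-1},\text{later tokens}]\neq 0$ in general, so cross terms survive), and $\|\mathbb E[\hat A\boldsymbol S]\|$ need not be $O(1)$ (for i.i.d.\ tokens sharing one parameter and a majority-vote reward it grows like $\sqrt T$), so without the independence hypothesis that route establishes nothing.
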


\begin{proof}[Proof]
We evaluate the variance of the coupled estimator, under the approximation that $\hat A$ and 
$\nabla_\theta \log \pi_\theta(o_t \mid \hat{\boldsymbol{q}}, \boldsymbol{o}_{<t})$ are independent, we have:
\begin{equation}
   \Var\bigl[\hat{g}_{\mathrm{GRPO}}\bigr] = \Var[\hat A \cdot \boldsymbol{S}] \nonumber \propto  \Var[\boldsymbol{S}] \cdot \mathbb{E}[\hat A^2].
\end{equation}
Finally, by construction of the group relative z-score normalization in GRPO, we have $\mathbb{E}[\hat A^2] = 1$. Therefore:
\[
  \Var\bigl[\hat{g}_{\mathrm{GRPO}}\bigr]
  \;\geq\; T \cdot G_0
  \;=\; \Omega(T). 
\]
\end{proof}

This theorem formally establishes that the gradient variance of the GRPO estimator scales with response length, directly causing the pronounced fluctuations observed in human preference accuracy during GRPO training. While the clipping mechanism in GRPO (i.e., truncating importance weights to $[1-\varepsilon, 1+\varepsilon]$) can partially control the gradient magnitude, it introduces additional optimization bias. This represents a bias-variance tradeoff rather than a fundamental resolution to the $\Omega(T)$ variance growth.

\section{Details on Data Collection}
\label{supp:data}
To construct our training dataset, we first caption collected real-world videos encompassing diverse subjects, dynamics, environments, styles, and camera movements. These captions are subsequently utilized as text prompts for video generation, effectively challenging T2V models to produce high-quality content that maintains strict semantic alignment with text prompt. We employ a suite of T2V models, including Gen-2 \cite{gen22023}, Pika 1.0 \cite{pika2023}, PixVerse (v1/v2) \cite{pixverse2025}, Dreamina \cite{Dreamina2025}, Luma \cite{luma2024}, Gen-3 \cite{gen22024}, and Kling  \cite{kling2026} to generate video pairs based on these prompts. Human annotators then evaluate these pairs, providing preference labels by analyzing alignment across five sub dimensions: object, dynamics, environment, style, and camera movement. In total, we collect 22K video pairs for training and an additional 1469 pairs as an in-domain dataset to benchmark DeScore.

The resulting preference data is then integrated into our two-stage training paradigm, where CoT data are strategically generated and filtered to match specific optimization objectives: (1) Discriminative Cold-Start. In this stage, the priority is to activate the scoring module rather than enforce reasoning quality. We employ Qwen3-VL-8B \cite{bai2025qwen3} to generate the CoTs. To ensure data reliability, we perform consistency-based filtering, retaining only pairs where the preference derived from the CoT aligns with ground-truth human labels. (2) Dual-Objective Reinforcement Learning (RL). To further refine reasoning quality via GRPO, we leverage Gemini-2.5-Pro \cite{Gemini-2.5-pro} to produce high quality and fine-grained CoTs, including sub-dimension scores (covering subjects, dynamics, environment, style, and camera movement) alongside the final preference. Consistent with the first stage, these CoTs are selectively filtered based on preference accuracy to ensure high dataset fidelity.

\section{User Instruction}
\label{supp:user_instrcution}
When evaluating human preference alignment across in-domain datasets and OOD benchmarks, we compare DeScore against state-of-the-art (SOTA) video reward models using their original prompts, which are designed for holistic video quality assessment. For DeScore, we apply the specific user instruction illustrated in Figure~\ref{box:prompt}. To accurately assess whether the generated video faithfully reflects the semantics of the text prompt, our instruction guides the model through a structured reasoning process. Specifically, it first deduces the expected visual elements from the text prompt, and subsequently verifies their presence in the generated video, explicitly assigning scores within the CoT across five fine-grained sub-dimensions: subject, dynamics, camera, environment, and style. Finally, the generated CoT rationale is processed by the dedicated scoring module to output the final scalar reward, providing a robust quantitative evaluation of the video's alignment with the text.
\begin{tcolorbox}[
    breakable,
    width=\textwidth,
    colback=white,
    colframe=black,
    enhanced,
    sharp corners,
    boxrule=1pt,
    drop shadow,
    coltitle=white,
    fonttitle=\bfseries,
    title=User Instruction for Our DeScore,
]
\small
\setlist{noitemsep, topsep=2pt, parsep=0pt, partopsep=0pt}  

\noindent \textbf{Role}: You are a professional Video-Text Alignment Judge. Your job is to strictly evaluate whether a generated video matches the user's text prompt.

\smallskip
\noindent \textbf{Inputs}:
\begin{itemize}
    \item \textbf{Text Prompt}: [Insert the text prompt here]
    \item \textbf{Video}: \textless video\textgreater
\end{itemize}

\noindent \textbf{Step 1: Thinking Process (Chain of Thought)}\\
\textit{You must perform your reasoning in plain text BEFORE generating the final \texttt{<answer>} block. Follow these three stages of thought:}

\begin{enumerate}
    \item \textbf{Prompt Decoding \& Visual Expectation}: Analyze the text prompt to establish a ``Gold Standard'' for evaluation before looking at the video.
    \begin{itemize}
        \item \textbf{Explicit Elements}: List specific entities, actions, texts, or objects mentioned.
        \item \textbf{Implicit/Abstract Translation}: If the prompt contains abstract concepts (e.g., ``loneliness'', ``cinematic'', ``chaos''), translate them into concrete visual indicators (e.g., ``dark lighting'', ``shallow depth of field'', ``fast camera movement'').
    \end{itemize}

    \item \textbf{Visual Evidence Extraction}: Look at the video and describe what you actually see objectively, without forcing a match to the prompt.
    \begin{itemize}
        \item What/Who is the main subject?
        \item What are the exact actions and movements?
        \item Are there any visual artifacts, hallucinations, or typos in generated text?
    \end{itemize}

    \item \textbf{Dimensional Analysis \& Score Calculation}: Compare the Visual Evidence against the Prompt Expectations across 5 dimensions.
    \begin{itemize}
        \item \textbf{Subject (Primary)}: Accuracy of main entities and text generation (spelling).
        \item \textbf{Dynamics (Primary)}: Accuracy of actions, physics, and temporal flow.
        \item \textbf{Camera (Secondary)}: Movement, angles, shot types.
        \item \textbf{Environment (Secondary)}: Background, setting, lighting.
        \item \textbf{Style (Secondary)}: Art style, aesthetic, visual quality.
    \end{itemize}
    \textit{Sub-Dimension Scoring:} Assign \textbf{2} (Perfect), \textbf{1} (Partial/Minor flaws), \textbf{0} (Failure/Hallucination), or \textbf{N/A} (Not applicable).
\end{enumerate}

\noindent \textbf{Step 2: Final Output}\\
After completing your thinking process, output the final scores strictly in the following JSON format enclosed within \texttt{<answer>} tags. Do not include any reasoning inside the JSON.

\begin{tcolorbox}[
    colback=gray!5,
    colframe=gray!30,
    boxrule=0.3pt,
    left=2mm, right=2mm,
    top=1mm, bottom=1mm,
]
\scriptsize
\begin{verbatim}
<answer>
{
  "dimensional_scores": {
    "subject": "0, 1, 2, or N/A",
    "dynamics": "0, 1, 2, or N/A",
    "camera": "0, 1, 2, or N/A",
    "environment": "0, 1, 2, or N/A",
    "style": "0, 1, 2, or N/A"
  },
  "overall_score": 1.00-5.00
}
</answer>
\end{verbatim}
\end{tcolorbox}

\end{tcolorbox}
\captionof{figure}{\textbf{User Instruction for Our DeScore during training and inference.}}
\label{box:prompt}
\section{Detailed Implementation}
\label{supp:imple}
We employ Qwen3-VL-8B \cite{bai2025qwen3} as the backbone for DeScore. In the discriminative cold start stage, the model is fine-tuned on our constructed CoT dataset using LoRA with a rank of 64. The model is trained for two epochs using the AdamW optimizer with a learning rate of $2\times10^{-6}$, a weight decay of 0.01, and a batch size of 32. The resulting checkpoint is then used to initialize the subsequent stage.
In the RL stage, we optimize a dual-objective comprising GRPO and BT losses. To balance the gradient scales between them, the coefficient for the BT-loss is set to 0.005, while the GRPO loss coefficient remains 1.0. We apply GRPO with a learning rate of $1\times10^{-6}$ and a group size of $G=8$. This stage is conducted for 65 steps with a rollout batch size of 128 and a mini-batch size of 32. For video processing during inference and training, the frame rate is set to 2 fps. All experiments are conducted on 8 A800 GPUs.

\section{Detailed Evaluation Setting}
\label{supp:eval}
\noindent \textbf{Baseline.}
To evaluate the capacity of DeScore in assessing diverse generative videos, we compare it against several video reward models. We first consider discriminative models, including VideoScore \cite{he2025videoscore2} and VideoAlign \cite{liu2025improving}, which are trained to predict point-wise rewards using MSE loss and BT loss, respectively. Furthermore, we compare DeScore with state-of-the-art generative models, including VisionReward \cite{xu2026visionreward}, UnifiedReward \cite{wang2025unified}, UnifiedReward-Thinking \cite{wang2025unified2}, and VideoScore2 \cite{he2025videoscore2}. While the first two directly generate reward tokens, the latter two utilize a coupled sampling path that generates the CoT prior to the final reward to enhance interpretability. Specifically, UnifiedReward-Thinking is a pair-wise model trained to identify the superior video from a given pair.

\noindent \textbf{Evaluation Benchmarks.} Comparison experiments are conducted on an in-domain preference dataset consisting of 1,469 sample pairs. This dataset is a held-out subset partitioned from our original data source and remains unseen during training. To evaluate the generalization of DeScore, we further benchmark it on two Out-of-Distribution (OOD) suites: GenAI \cite{jiang2024genai} and VideoGen-Bench \cite{liu2025improving, zeng2024dawn}. The former comprises 1.9k pairs generated by 10 early-stage T2V models across 508 prompts, typically characterized by lower resolutions ($\sim$320$\times$512) and shorter durations (2s-2.5s). In contrast, the latter contains 26.5k video pairs from 12 diverse open- and closed-source models across 420 prompts, featuring higher resolutions (up to 576$\times$1024), longer durations (4s--6s), and significantly improved visual quality. These two benchmarks represent early-generation and current state-of-the-art video models, respectively, providing a comprehensive assessment of DeScore across diverse generative scenarios.

\noindent{\textbf{Evaluation Metrics.}} Across diverse video reward benchmarks, we employ preference accuracy to assess the performance of DeScore, reporting both accuracy with ties (Acc w/ Tie) and without ties (Acc w/o Tie). Specifically, Acc w/o Tie directly compares the point-wise scores within each video pair, assigning the preference to the higher-scoring video. For Acc w/ Tie, if the absolute score difference between the two videos falls below a predefined threshold, the pair is classified as a tie. Furthermore, because the preference labels in both our training and evaluation datasets are fundamentally grounded in semantic fidelity to the text prompt, we specifically report the Text Alignment (TA) preference accuracy when evaluating DeScore on VideoGen-Bench. For state-of-the-art (SOTA) baselines, we utilize their corresponding task-specific scores for a fair comparison. Specifically, we evaluate VideoAlign using its dedicated text alignment (TA) sub-score. For UnifiedReward and VideoScore2, we explicitly apply their ``text-to-video alignment'' scores for this dimension, while for all other remaining baseline models, we utilize their overall scores for performance comparison.

\section{Training Details of Ablation Study}
\label{supp:ablation}

\noindent \textbf{Discriminative Version.} This version shares a similar architecture with VideoAlign but employs Qwen3-VL-8B \cite{bai2025qwen3} as the backbone for feature extraction. Unlike DeScore, the learnable query token is appended directly after the multi-modal inputs without any explicit CoT generation. To ensure a fair comparison, it is trained on the exact same dataset and uses the identical configuration as the discriminative cold start stage of DeScore. Specifically, this discriminative baseline is fine-tuned via LoRA (rank of 64) for two epochs, using the AdamW optimizer with a learning rate of $2 \times 10^{-6}$, a weight decay of 0.01, and a batch size of 32.

\noindent \textbf{Generative Version.} This version follows the standard training paradigm for generative reward models, which first conducts supervised fine-tuning (SFT) and subsequently refines the model's capabilities via GRPO. It is formulated as a pair-wise model, given that only relative preferences, rather than absolute video reward scores, are accessible in the training dataset. The CoT data used for SFT is generated by Gemini-2.5-Pro following the approach detailed in Section~\ref{sec:data}. During the GRPO training stage, we incorporate additional data to ensure robust model convergence. The model is optimized using the AdamW optimizer with a weight decay of 0.01 and a learning rate of $1 \times 10^{-6}$. Specifically, the GRPO training utilizes a rollout group size of $G=8$, a rollout batch size of 256, and an update mini-batch size of 64.

\section{Additional Experiments}
\subsection{Visualization}
\label{supp:visual}
\begin{figure*}[!t]
\centering
\includegraphics[width=\textwidth]{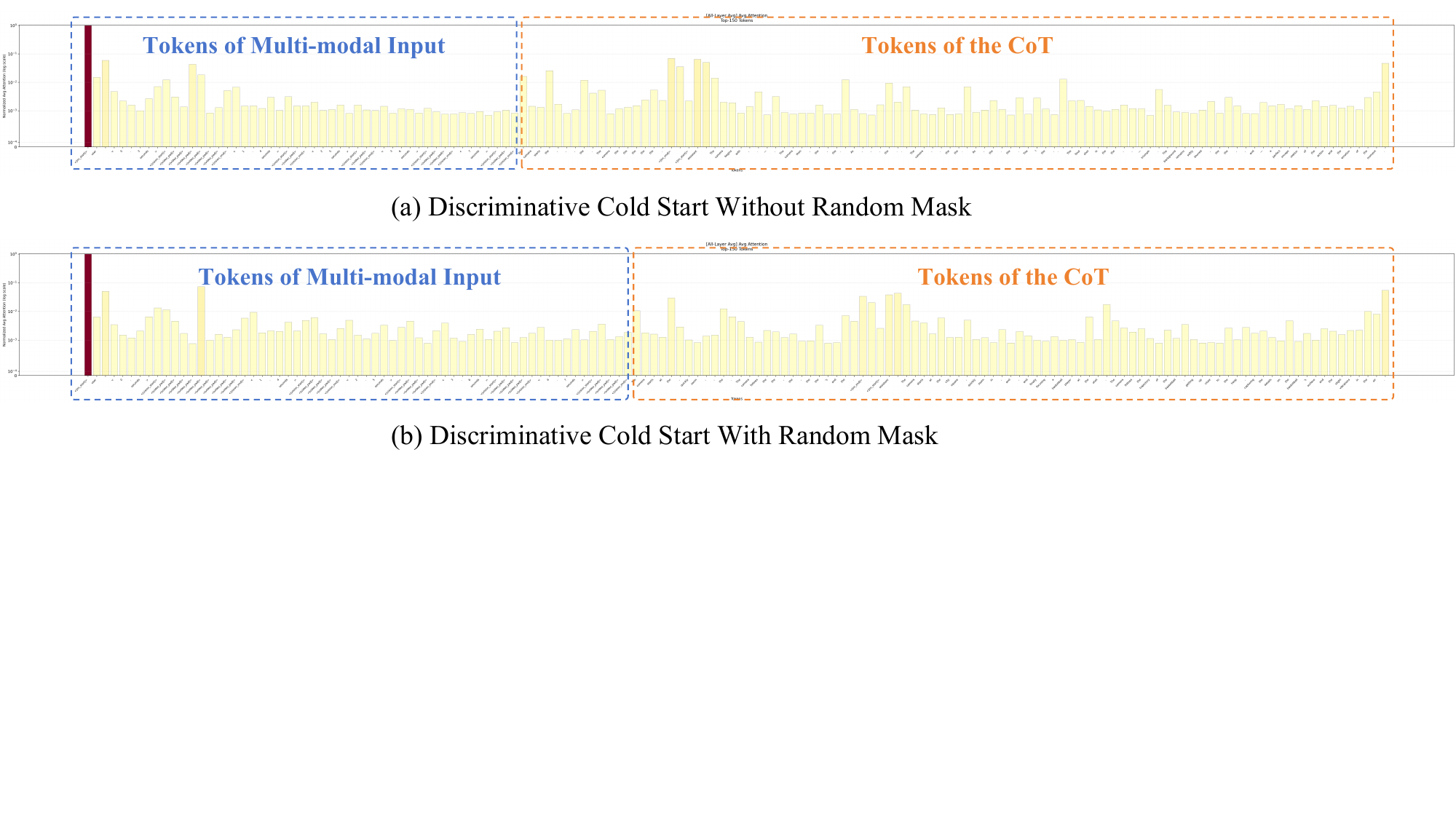}
\vspace{-4cm} 
\caption{\textbf{Visualization of the top 150 tokens most attended to by the \texttt{[Reward]} token.}
Random masking encourages the model to attend more extensively to multi-modal input tokens, preventing the final reward prediction from relying solely on the CoT.}
\vspace{-5mm}
\label{fig:visual}
\end{figure*}
To validate our hypothesis that random masking encourages the
model to jointly leverage multi-modal inputs and CoT, rather than over-relying on reasoning tokens, we visualize the top 150 tokens receiving the highest attention from the final reward query. As shown in Figure~\ref{fig:visual}, random masking causes the query token to attend to both multi-modal
inputs and CoT, rather than relying solely on reasoning tokens, which contributes to better reward performance.

\subsection{Improving Video Generation}
\label{supp:video_grpo}
\begin{figure*}[!t]
\centering
\includegraphics[width=\textwidth]{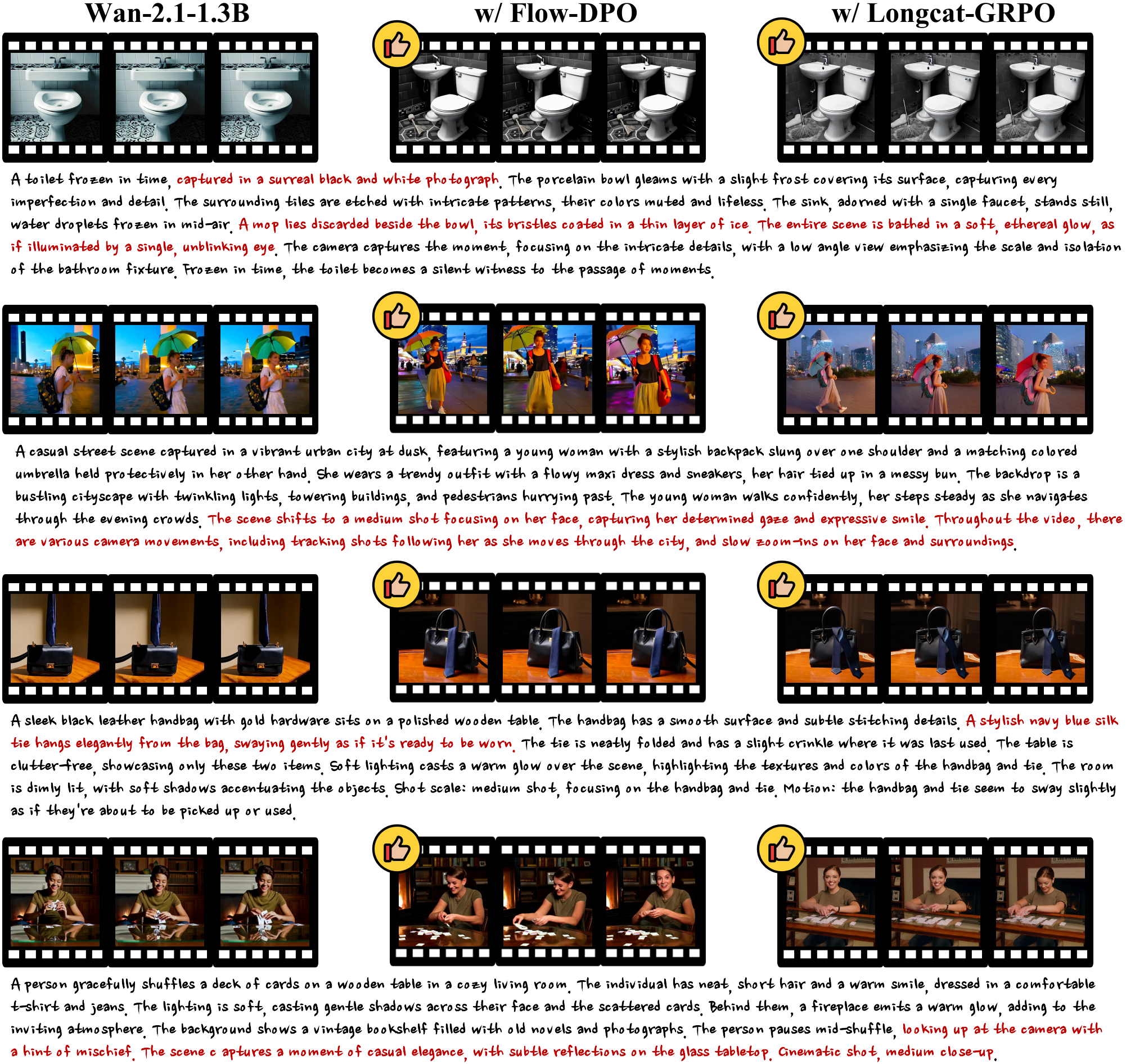}
\vspace{-4mm}
\caption{\textbf{Qualitative examples of improved video generation with DeScore.}}
\vspace{-7mm}
\label{fig:grpo_case}
\end{figure*}
To further demonstrate the effectiveness of DeScore in improving generated video quality, we integrate it into two representative post-training paradigms, Longcat-GRPO \cite{team2025longcat} and Flow-DPO \cite{liu2025improving}, on Wan-2.1-1.3B \cite{wan2025wan}, and evaluate the resulting models on VBench \cite{huang2024vbench}. We sample 10K prompts from those used to construct our preference video dataset. For Longcat-GRPO, we follow the official setting in its technical report. For Flow-DPO, we generate videos with Wan-2.1-1.3B and form positive and negative pairs according to reward scores from the corresponding reward model, with the pairs periodically updated during training following the official setting.

As shown in Table~\ref{tab:video_grpo}, DeScore consistently improves the quality of generated videos under both post-training paradigms. We further provide qualitative examples in Figure~\ref{fig:grpo_case}. Compared with the baseline Wan-2.1-1.3B, post-training with DeScore significantly improves video quality across multiple aspects, including subject fidelity, camera motion, spatial relationships, and dynamics. While the baseline model often fails to generate videos that faithfully match the text prompt, both Flow-DPO and Longcat-GRPO equipped with DeScore produce videos that better capture the semantic content of the prompt.

\section{Limitation and Future Work}
\label{limit}
Although DeScore demonstrates strong generalization across diverse scenarios, it is primarily evaluates whether generated videos are faithful to the input text prompt. Consequently, it may be less effective at capturing motion implausibility and visual artifacts. In future work, we plan to extend our decoupled paradigm to multi-dimensional video reward modeling, aiming to build a more comprehensive reward model for advancing video generation.

\section{Ethics statement}
\label{ethics}
Our work focuses on algorithmic improvements for video reward modeling, with the goal of more accurately assessing the quality of generated videos. The training data consist of videos generated by open-source and closed-source generative models, and the evaluation is conducted on publicly available benchmarks, including GenAI, VideoGen-Bench, and VBench. No data are collected from human subjects. We do not anticipate any direct, immediate, or negative societal impacts arising from this research.

\section{Reproducibility statement}
\label{code}
To ensure reproducibility, we provide detailed descriptions of datasets, training configurations, and hyper-parameters in both the main text and supplementary materials. We will provide our code to facilitate reproducibility.



\end{document}